\documentclass{article}
\usepackage{CISE,times}

\usepackage{amsmath,amsfonts,bm}

\def\eqref#1{equation~\ref{#1}}

\def\1{\bm{1}}

\DeclareMathAlphabet{\mathsfit}{\encodingdefault}{\sfdefault}{m}{sl}
\SetMathAlphabet{\mathsfit}{bold}{\encodingdefault}{\sfdefault}{bx}{n}

\usepackage[utf8]{inputenc}
\usepackage[T1]{fontenc}

\usepackage{amsmath,amssymb,amsthm}

\usepackage{algorithm}
\usepackage{algorithmic}

\usepackage{graphicx}
\usepackage{booktabs}
\usepackage{array}
\usepackage{tabularx}

\usepackage{microtype}
\usepackage{xcolor}
\usepackage{url}
\usepackage{hyperref}

\newtheorem{theorem}{Theorem}

\title{Reliable Self-Evolution with \\ Imperfect Proxy Rewards.}

\author{
\noindent Kangjun Noh$^1$, Soyu Kim$^1$, Kyungwoo Song$^1$$^{\dagger}$ \\
$^1$Yonsei University \\
}

\iclrfinalcopy

\hypersetup{
    pdftitle={Reliable Self-Evolution with Imperfect Proxy Rewards.},
    pdfauthor={Kangjun Noh, Soyu Kim, Kyungwoo Song}
}

\begin{document}

\maketitle

\begingroup
\renewcommand{\thefootnote}{\fnsymbol{footnote}}
\footnotetext[2]{Corresponding author.}
\endgroup

\begin{abstract}
Large language model (LLM)-based self-evolving search is a promising approach to scientific discovery. However, high-fidelity evaluation of every candidate is prohibitively expensive in some domains. Self-evolving systems in such settings therefore rely on low-cost but imperfect proxy rewards, which may assign high scores to infeasible candidates. These false positives may contaminate both the final output and the feedback used to guide subsequent generations. This motivates statistically calibrated reward intervals for more reliable self-evolving search. We propose Conformal Interval-Driven Self-Evolution (CISE), which constructs candidate-specific reward intervals using conditional conformal inference and iteration-wise online density-ratio estimation. CISE uses conservative interval-based rewards for evolutionary feedback and returns candidates only when all required property intervals lie entirely within their respective feasible regions. We derive fixed-iteration coverage results under explicit assumptions of independence and covariate shift. We evaluate CISE on three self-evolving search tasks in materials science. In our experiments, all candidates returned by CISE are true positives under high-fidelity evaluation, whereas the baselines return more candidates but include false positives. These results highlight the value of a smaller, more precise shortlist when downstream validation budgets are limited. Our repository is available at \url{https://github.com/MLAI-Yonsei/CISE.git}.
\end{abstract}

\section{Introduction}
\label{sec:introduction}
Self-evolving search uses large language models (LLMs) to generate candidates and improve subsequent candidates using feedback from previous evaluations. This paradigm has recently emerged as an approach to scientific discovery~\citep{romera2024mathematical,novikov2025alphaevolvecodingagentscientific,liu2026evoxmetaevolutionautomateddiscovery}.
These methods retain previously evaluated candidates together with their rewards and reuse them to guide subsequent search.
The effectiveness of self-evolution therefore depends on the reliability of the rewards that determine which candidates are treated as successful and reused in later iterations.

However, obtaining the true reward for every generated candidate is often slow or expensive in iterative search.
In such settings, self-evolving systems often rely on computationally inexpensive proxy rewards that approximate the true reward.
For example, in materials discovery, evaluating every candidate with high-fidelity density functional theory (DFT) calculations or experimental validation is prohibitively expensive. Property prediction models are therefore often used as proxy rewards during search~\citep{abhyankar2026llemaevolutionarysearchllms}.
Although proxy rewards make iterative search practical, prediction errors may produce false positives: candidates that satisfy the target constraints under the proxy reward but fail under the true reward.
In self-evolving search, such errors affect more than the returned candidates.
Once a false positive is assigned a favorable reward and stored as a successful example, it is reused to guide subsequent generations.
This false-positive propagation may steer the search toward regions where the proxy reward is overly optimistic rather than toward candidates with genuinely high true rewards.

This motivates uncertainty-aware reward estimates rather than relying solely on point-valued proxy rewards.
Conformal inference~\citep{vovk2005algorithmic,angelopoulos2022gentleintroductionconformalprediction} provides a principled way to construct such estimates by converting model predictions into calibrated intervals with statistical coverage guarantees.
In particular, conditional conformal inference~\citep{gibbs2024conformalpredictionconditionalguarantees} incorporates candidate characteristics into calibration, which is useful when proxy-reward reliability varies across candidates. These calibrated intervals support conservative decisions about which candidates guide subsequent evolution and which candidates are returned as outputs.

We propose Conformal Interval-Driven Self-Evolution (CISE), a self-evolving framework that replaces point-valued proxy rewards with candidate-specific calibrated reward intervals.
CISE uses conditional conformal inference to construct candidate-specific reward intervals with adaptive widths.
The resulting reward interval is stored in the archive and used to guide subsequent generations, while only candidates whose entire reward intervals satisfy the task-specific constraints are returned as outputs.
Because the candidate distribution can change as the archive, selected parents, and generation context evolve, CISE additionally performs iteration-wise online density-ratio estimation~\citep{JMLR:v10:kanamori09a} to characterize the difference between the source distribution and the current candidate distribution and incorporates the estimated ratio into conditional conformal inference. As a result, calibrated reward intervals are used for both evolutionary feedback and final candidate selection.

Theoretically, we analyze fixed-iteration marginal coverage under explicit independence and covariate-shift assumptions and characterize how density-ratio estimation error affects the resulting guarantee. Empirically, we evaluate CISE on three self-evolving material-property search tasks and validate the returned candidates using high-fidelity DFT calculations. Across the reported experiments, CISE trades output yield for higher observed DFT-validated output precision: all CISE outputs with resolved DFT evaluations are true positives, whereas the baseline methods return false positives. We further show under a matched downstream validation budget that the reliability improvement cannot be explained solely by CISE returning fewer candidates. These results demonstrate the potential of calibrated reward intervals to make self-evolving search more reliable when accurate true rewards are too expensive to use throughout evolution. To the best of our knowledge, CISE is the first framework to use conformal proxy-reward intervals for both evolutionary feedback and final candidate selection in LLM-based self-evolving candidate search.

\section{Related Work}
\label{sec:related_work}
Self-evolving search reuses previously generated candidates and their rewards to guide subsequent generations across domains, including program discovery~\citep{romera2024mathematical,novikov2025alphaevolvecodingagentscientific}, video-prompt optimization~\citep{ji2024promptavideopromptvideodiffusion}, scientific hypothesis generation~\citep{Gottweis_2026}, and materials discovery~\citep{abhyankar2026llemaevolutionarysearchllms}. While some settings provide directly verifiable rewards, others rely on learned or LLM-generated proxy rewards to enable efficient iterative search. In such cases, proxy-favored candidates may not correspond to true improvements, and reusing them as successful examples can propagate reward errors across subsequent generations. This motivates accounting for proxy-reward uncertainty in both evolutionary feedback and final selection.

Recent work has sought to improve the reliability of proxy evaluations and rewards through calibration and uncertainty-aware optimization. For LLM-as-a-judge, post-hoc score calibration learns regression models from limited human feedback to adjust judge scores using their original scores and rationales~\citep{sahoo2026quantitative}, while linear probes on judge hidden states provide calibrated confidence estimates for the correctness of their judgments~\citep{radharapu2025calibratingllmjudgeslinear}. Related work on learned reward models addresses uncertainty and overoptimization directly: ensemble-based conservative objectives use disagreement among reward models to reduce optimization against errors in an imperfect reward model~\citep{coste2024rewardmodelensembleshelp}. Process reward models have also been calibrated using quantile regression to better align their predictions with empirical success probabilities and support uncertainty-aware allocation of inference-time computation~\citep{park2025knowdontknowuncertainty}. These approaches improve the calibration, interpretation, or robustness of model-derived rewards. However, they do not statistically control false-positive rewards under task-specific true-reward constraints in iterative self-evolving search.

Conformal inference provides a complementary framework for uncertainty quantification by converting model predictions into calibrated prediction sets or intervals with finite-sample coverage guarantees under explicit assumptions~\citep{vovk2005algorithmic,angelopoulos2022gentleintroductionconformalprediction}. Conditional conformal inference formulates conditional validity through classes of covariate shifts and provides finite-sample guarantees for shifts represented by a finite-dimensional calibration function space~\citep{gibbs2024conformalpredictionconditionalguarantees}. More recently, conformal inference has been applied to LLM-as-a-judge evaluation, replacing point-valued judge scores with calibrated intervals that explicitly represent evaluation uncertainty~\citep{sheng-etal-2025-analyzing,noh2026reliable}. Building on this line of work, we use conformal inference to construct calibrated intervals for proxy rewards and use these reward intervals in both evolutionary feedback and final candidate selection in self-evolving search.

\section{Preliminaries: Self-Evolving Agent for Material Search with Proxy Reward}
\label{sec:self_evolving_framework}
We consider a self-evolving agent that searches a candidate space $\mathcal X$ for solutions satisfying task-specific constraints. For each candidate $x\in\mathcal X$, $r(x)\in\mathbb R$ denotes a scalar true reward, and $\hat r(x)\in\mathbb R$ denotes an inexpensive and imperfect proxy reward. During search, generated candidates are evaluated using the proxy reward. The task-specific constraints are specified before evolution. Let $\mathcal F_r$ and $\mathcal F_{\hat r}$ denote the sets of candidates satisfying these constraints according to the true and proxy rewards, respectively. Membership in $\mathcal F_{\hat r}$ is determined by computing $\hat r(x)$ and checking it against the specified constraints. A candidate $x\in\mathcal F_{\hat r}\setminus\mathcal F_r$ is a false positive: it satisfies the constraints under the proxy reward but violates them under the true reward.

The archive $\mathcal H_t$ stores candidates generated through iteration $t$ together with their proxy rewards. The parent-selection policy $\pi_t(\cdot\mid\mathcal H_t)$ selects $K$ candidates from the archive to form the parent set $\mathcal S_t$, which guides candidate generation at the next iteration. We treat $\{\pi_t\}_{t=0}^{T}$ as a given sequence of selection rules; we do not consider a separate policy-learning step. As the archive and its stored rewards change, the selected parents and the resulting candidate generation can nevertheless change across iterations. Algorithm~\ref{alg:proxy_self_evolution} summarizes the proxy-reward-driven search process. At iteration $t$, the agent uses the previous parents $\mathcal S_{t-1}$ and their stored rewards to generate a batch $\mathcal C_t=\{x_{t,1},\ldots,x_{t,M}\}$ of $M$ new candidates. The proxy reward is computed for each candidate, and the resulting candidate-reward pairs are added to the archive. Candidates whose proxy rewards satisfy the task-specific constraints are also added to the cumulative output set, giving $\mathcal O_t=\mathcal O_{t-1}\cup(\mathcal C_t\cap\mathcal F_{\hat r})$. The updated archive is then used to select the next parent set. The archive and output set serve different purposes: the archive retains generated candidates as search experience, whereas the output set contains only candidates that satisfy the proxy-reward constraints.

\begin{figure*}[t]
    \centering
    \includegraphics[width=\textwidth]{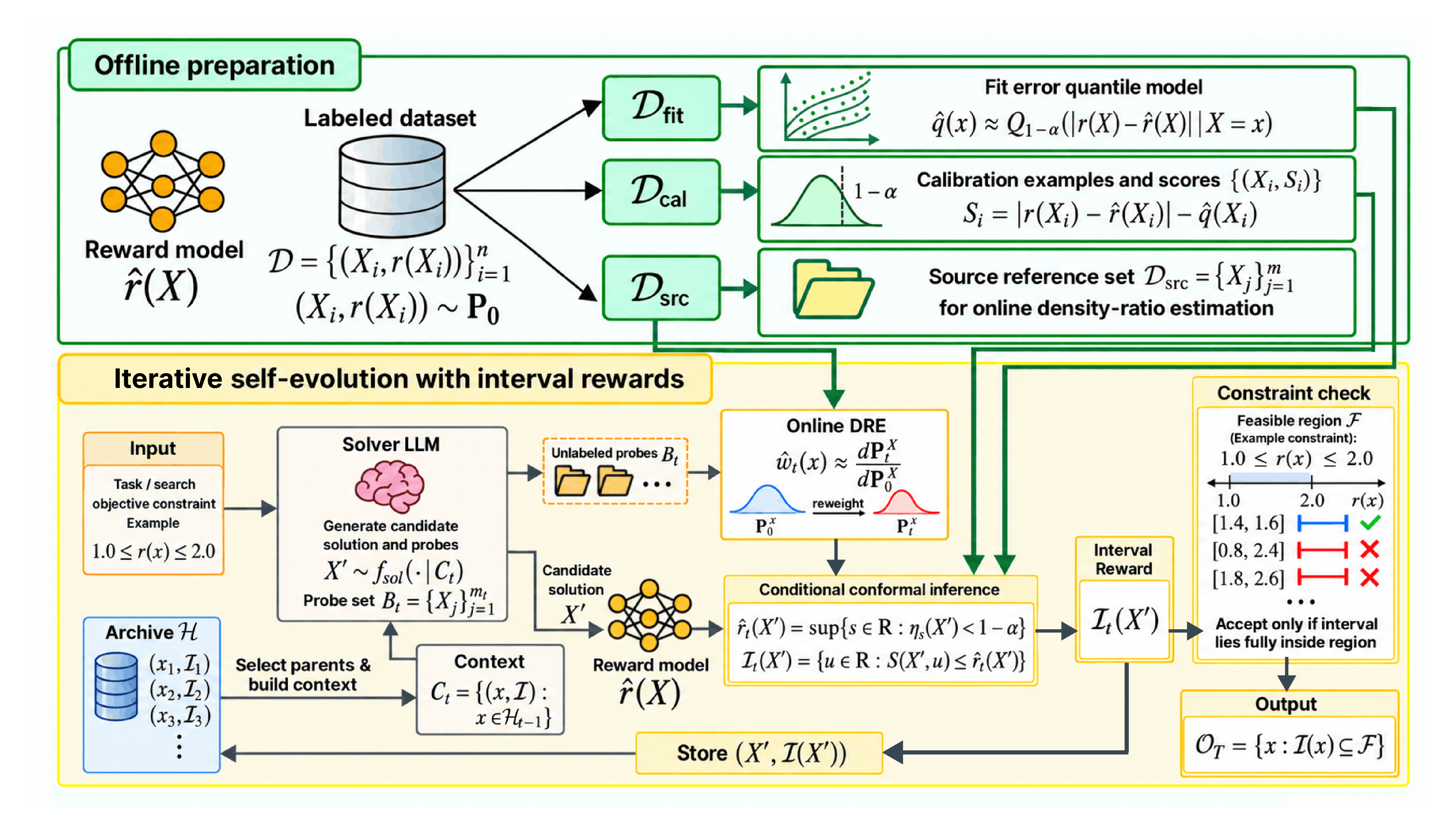}
    \caption{Overview of CISE. Offline, we fit the conditional proxy-error quantile using $\mathcal D_{\mathrm{fit}}$, $\mathcal D_{\mathrm{cal}}$ provides calibration scores, and $\mathcal D_{\mathrm{src}}$ provides the source reference for density-ratio estimation. During evolution, unlabeled probes are used to estimate the current distribution shift, which is incorporated into conditional conformal inference to construct candidate-specific reward intervals. These intervals guide subsequent evolution and candidates are returned only when the entire interval satisfies the task constraint.}
    \label{fig:cise_method}
\end{figure*}

\section{Methodology}
\subsection{Problem Setup}
\label{sec:problem_setup}
We are given a labeled source dataset $\mathcal D=\{(X_i,r(X_i))\}_{i=1}^{n}$ of independent candidate-reward pairs drawn from a joint distribution $\mathbf P_0$, with candidate marginal $\mathbf P_0^X$. We partition $\mathcal D$ into three mutually disjoint subsets: $\mathcal D_{\mathrm{fit}}$, $\mathcal D_{\mathrm{cal}}$, and $\mathcal D_{\mathrm{src}}$. The fitting set $\mathcal D_{\mathrm{fit}}$ is used to train a model $\hat q(x)$ that predicts an upper conditional quantile of the absolute proxy error $|r(x)-\hat r(x)|$. The calibration set $\mathcal D_{\mathrm{cal}}$ is used to construct conformal reward intervals, while $\mathcal D_{\mathrm{src}}$ provides the source reference for density-ratio estimation. Only the candidate information in $\mathcal D_{\mathrm{src}}$ is used for density-ratio estimation. The proxy reward model, fitted quantile model, and candidate representation are fixed before evolution.

For a fixed iteration $t$, let $\mathbf P_t$ denote the joint distribution of a candidate and its true reward under the current archive, selected parents, and generation configuration. For a fresh candidate--reward pair $(X',r(X'))\sim\mathbf P_t$, CISE uses conditional conformal inference~\citep{gibbs2024conformalpredictionconditionalguarantees} to construct an iteration-specific prediction interval $\mathcal I_t(X')$ for the true reward. This interval represents uncertainty in the proxy reward and is used for both evolutionary feedback and output selection.

For evolutionary feedback, a candidate with a nonempty, bounded interval is evaluated conservatively over that interval using a task-specific rule. The resulting interval reward is stored with the candidate in $\mathcal H_t$. The parent-selection policy $\pi_t(\cdot\mid\mathcal H_t)$ uses these stored rewards to select the next parents, and subsequent generation uses the selected parents and their rewards as feedback. For output selection, a candidate is added to $\mathcal O_t$ only when its interval is nonempty and bounded and every reward value in the interval satisfies the task-specific constraints. A lower-bound constraint is checked using the lower endpoint, an upper-bound constraint using the upper endpoint, and a bounded-range constraint requires the entire interval to lie within the prescribed range. Thus, a candidate with a valid interval may be retained in the archive even when it does not satisfy the output criterion.

Our target is iteration-wise marginal coverage: for a fresh candidate--reward pair at a fixed iteration $t$, the true reward should lie in its prediction interval with probability at least $1-\alpha$,
\begin{equation}
    \label{eq:coverage_iterative}
    \mathbb P\!\left(r(X')\in\mathcal I_t(X')\right)
    \ge 1-\alpha,
    \qquad t=1,\ldots,T.
\end{equation}
The probability is marginal over the source data and the fresh candidate--reward pair under the assumptions stated below. The guarantee is evaluated separately at each iteration and does not assert simultaneous coverage over the full search trajectory. We first analyze this target under exchangeability and then characterize the effect of covariate shift and density-ratio estimation error.

Let $A_t$ denote acceptance at iteration $t$. Since acceptance requires the entire interval to satisfy the task constraint,
\[
A_t\cap\{X'\notin\mathcal F_r\}
\subseteq
\{r(X')\notin\mathcal I_t(X')\},
\]
and hence $\mathbb P(A_t,X'\notin\mathcal F_r)\le\alpha$ whenever \eqref{eq:coverage_iterative} holds. This controls the joint acceptance-and-false-positive event, not the false-positive probability conditional on acceptance.

\subsection{Conformal Interval-driven Self-Evolution under Exchangeability}
\label{sec:exchangeable_cci}

We first consider an exchangeable reference setting with $\mathbf P_t=\mathbf P_0$ for a fixed iteration $t$ before considering covariate shift. Write the calibration pairs as $\{(X_i,r(X_i))\}_{i=1}^{n_{\mathrm{cal}}}$, and let $(X',r(X'))\sim\mathbf P_t$ denote an independent fresh candidate-reward pair. The equality $\mathbf P_t=\mathbf P_0$ means that the calibration candidates and the newly generated candidate follow the same joint distribution over material structures and true rewards. We hold the archive, selected parents, and generation configuration fixed, and assume that the resulting candidate distribution is independent of $\mathcal D_{\mathrm{cal}}$.

Let $S(x,u;\hat r)$ denote a fixed nonconformity score measuring the discrepancy between a hypothetical reward value $u$ and the proxy prediction $\hat r(x)$. Let $\hat q(x)$ denote a fitted conditional quantile of the absolute proxy error. We define the score as $S(x,u;\hat r)=|u-\hat r(x)|-\hat q(x)$, so that larger values indicate discrepancies that are unusually large relative to the prediction error expected at $x$. For each calibration pair $(X_i,r(X_i))$, we write $S_i=S(X_i,r(X_i);\hat r)$. Let $\mathcal G$ be a finite-dimensional linear function space spanned by the constant function and fixed functions of the candidate representation $\phi(x)$. For example, $\mathcal G$ may be chosen as $\mathcal G=\operatorname{span}\{1,\phi_1(x),\ldots,\phi_d(x)\}$. This space allows the calibration adjustment to vary with candidate characteristics.

Following \citet{gibbs2024conformalpredictionconditionalguarantees}, we calibrate the scores through an augmented quantile regression that includes the fresh candidate. For a fixed $X'$ and an imputed test score $s\in\mathbb R$, define
\begin{equation}
\widehat q_s
\in
\underset{q\in\mathcal G}{\arg\min}
\left\{
\sum_{i=1}^{n_{\mathrm{cal}}}
\ell_\alpha\!\left(q(X_i),S_i\right)
+
\ell_\alpha\!\left(q(X'),s\right)
\right\},
\label{eq:augmented_quantile_calibration}
\end{equation}
where
$\ell_\alpha(a,b)=(1-\alpha)(b-a)_++\alpha(a-b)_+$
is the pinball loss for the $(1-\alpha)$-quantile, with
$(v)_+=\max\{v,0\}$.
The additional term incorporates the fresh candidate into calibration using a hypothetical score, allowing the adjustment to account for $X'$ before its true reward is observed.

Let $\eta_s(X')\in[-\alpha,1-\alpha]$ denote the dual variable associated with the fresh candidate in~\eqref{eq:augmented_quantile_calibration}. This variable is nondecreasing in $s$. The nonrandomized score cutoff and the resulting reward interval are
\begin{equation}
\begin{aligned}
\widehat\tau_t(X')
&=
\sup\left\{
s\in\mathbb R:
\eta_s(X')<1-\alpha
\right\},
\\
\mathcal I_t(X')
&=
\left\{
u\in\mathbb R:
S(X',u)\le\widehat\tau_t(X')
\right\}.
\end{aligned}
\label{eq:cci_interval}
\end{equation}
The cutoff is computed from the dual boundary of the unpenalized augmented problem. Inverting the chosen absolute-error score yields an interval centered at $\hat r(X')$, with its width determined by the fixed normalization, fitted base quantile, and calibrated cutoff. An empty interval leads to abstention under the output rule in Section~\ref{sec:problem_setup}.

\begin{theorem}[Iteration-wise marginal coverage under exchangeability]
\label{thm:exchangeable_cci}
For a fixed iteration $t$, assume that the calibration pairs are i.i.d.\ draws from $\mathbf P_0$, and that the fresh pair $(X',r(X'))$ is an independent draw from $\mathbf P_t=\mathbf P_0$. Suppose that the score $S$ and finite-dimensional linear space $\mathcal G$ are fixed independently of these pairs, and that $\mathcal G$ contains the constant function. If~\eqref{eq:augmented_quantile_calibration} is solved exactly with a dual-solution rule symmetric under permutations of the augmented observations, then the interval in~\eqref{eq:cci_interval} satisfies
\[
\mathbb P
\left(
r(X')\in\mathcal I_t(X')
\right)
\ge 1-\alpha.
\]
\end{theorem}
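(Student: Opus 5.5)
The plan is to reduce the statement to the Gibbs--Cherian--Cand\`es argument, specialized to the constant function in $\mathcal G$. Write $S'=S(X',r(X');\hat r)$ for the true test score. Because $\hat r$ and $\hat q$ are fixed before evolution, $S$ is a fixed measurable map. The augmented sample $\{(X_i,S_i)\}_{i=1}^{n_{\mathrm{cal}}}\cup\{(X',S')\}$ therefore consists of $n_{\mathrm{cal}}+1$ i.i.d., hence exchangeable, pairs under $\mathbf P_t=\mathbf P_0$ and the independence assumption. By the definition of $\mathcal I_t$, the event $r(X')\in\mathcal I_t(X')$ is exactly $\{S'\le\widehat\tau_t(X')\}$. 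Using the definition of $\widehat\tau_t$ as a supremum and the monotonicity of $s\mapsto\eta_s(X')$, the first step will show that
\[
\{\eta_{S'}(X')<1-\alpha\}\subseteq\{S'\le\widehat\tau_t(X')\}.
\]
This reduces the claim to showing $\mathbb P(\eta_{S'}(X')<1-\alpha)\ge1-\alpha$, i.e.\ $\mathbb P(\eta_{S'}(X')=1-\alpha)\le\alpha$, since the dual variable is confined to $[-\alpha,1-\alpha]$.

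Next I will write the dual of the augmented pinball-loss problem~\eqref{eq:augmented_quantile_calibration} with $s=S'$. It maximizes $\sum_{j}\eta_j S_j$ over $\eta\in[-\alpha,1-\alpha]^{n_{\mathrm{cal}}+1}$ subject to $\sum_j\eta_j g(X_j)=0$ for all $g\in\mathcal G$. Taking $g\equiv1$ gives $\sum_j\eta_j=0$. KKT complementary slackness yields $\eta_j=1-\alpha$ whenever $S_j>\widehat q_{S'}(X_j)$ and $\eta_j=-\alpha$ whenever $S_j<\widehat q_{S'}(X_j)$. Let $N$ be the number of indices with $\eta_j=1-\alpha$. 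Since the remaining coordinates are at least $-\alpha$, the constraint $\sum_j\eta_j=0$ gives
\[
0\ge N(1-\alpha)-\alpha\,(n_{\mathrm{cal}}+1-N),
\]
so $N\le\alpha(n_{\mathrm{cal}}+1)$.

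The exchangeability step comes next. Because the primal objective is symmetric in the augmented observations and the dual-solution rule is assumed permutation-symmetric, the vector $(\eta_1,\dots,\eta_{n_{\mathrm{cal}}},\eta_{S'}(X'))$ is a permutation-equivariant function of an exchangeable sequence. Hence each coordinate has the same marginal law, and
\[
\mathbb P(\eta_{S'}(X')=1-\alpha)=\frac{1}{n_{\mathrm{cal}}+1}\,\mathbb E\!\left[\sum_j\mathbf 1\{\eta_j=1-\alpha\}\right]=\frac{\mathbb E N}{n_{\mathrm{cal}}+1}\le\alpha .
\]
This gives the claimed $1-\alpha$ coverage.

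I expect the main obstacle to be the first step: making rigorous that $\eta_s(X')$ is well defined and monotone, and that the event $\eta_{S'}(X')<1-\alpha$ indeed implies $S'\le\widehat\tau_t(X')$ at the boundary, including cases with ties or a nonunique dual. I would handle this by invoking the monotonicity fact the paper states (the dual variable is nondecreasing in $s$), which makes $\{s:\eta_s(X')<1-\alpha\}$ a down-set whose supremum is $\widehat\tau_t$. The inclusion then follows because $S'$ belongs to that set. The symmetric tie-breaking rule is exactly what makes the equivariance step valid despite nonuniqueness. Only the marginal guarantee is needed, so I will not require absolute continuity of the scores.
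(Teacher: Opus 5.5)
Your proposal is correct and follows essentially the same route as the paper's proof. Both arguments use LP duality for the augmented pinball problem, then the zero-sum constraint from the constant function together with the box bounds to show at most $\alpha(n_{\mathrm{cal}}+1)$ coordinates sit at $1-\alpha$, then symmetry plus exchangeability to get $\mathbb P(\eta_{S'}(X')=1-\alpha)\le\alpha$, and finally the inclusion $\{\eta_{S'}(X')<1-\alpha\}\subseteq\{S'\le\widehat\tau_t(X')\}$.

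The step you flagged as the main obstacle is actually immediate, since $S'$ lies in the set whose supremum defines $\widehat\tau_t(X')$. So neither monotonicity of $s\mapsto\eta_s(X')$ nor the complementary-slackness characterization is needed, which is exactly what the paper remarks.
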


At the true test score, the augmented observations are exchangeable. The constant function in $\mathcal G$ imposes a zero-sum constraint on the optimal dual variables. Together with their bounds, this limits the fraction reaching the upper boundary $1-\alpha$ to at most $\alpha$. The result establishes~\eqref{eq:coverage_iterative} in the reference setting. The next subsection adapts the calibration function space to the current candidate distribution when $\mathbf P_t\neq\mathbf P_0$.

\subsection{Conformal Interval-driven Self-Evolution under Evolutionary Covariate Shift}
\label{sec:shift_aware_cci}

As evolution proceeds, changes in the archive, selected parents, and generation strategy alter the distribution of materials proposed by the agent. For a fixed iteration $t$, let $\mathbf P_t$ denote the joint distribution of a candidate and its true reward under the current archive, parent set, and generation configuration. We consider evolutionary covariate shift satisfying
\begin{equation}
\mathbf P_t(r(X)\mid X)
=
\mathbf P_0(r(X)\mid X),
\qquad
\mathbf P_t^X\ll\mathbf P_0^X,
\qquad
w_t(x)
=
\frac{d\mathbf P_t^X}{d\mathbf P_0^X}(x)
\le W,
\label{eq:evolutionary_covariate_shift}
\end{equation}
where $W<\infty$ is a bounded-overlap constant. Thus, evolution may change how frequently different material structures are generated while preserving the conditional relationship between a structure and its true reward. The density ratio $w_t$ describes this change relative to the source distribution.

At iteration $t$, we generate an unlabeled probe batch $\mathcal B_t$ from $\mathbf P_t^X$ using the current parents and their stored rewards. The probe batch is used only for density-ratio estimation, and fresh candidates are subsequently generated for interval evaluation and archive updates. The archive, parents, and generation configuration remain fixed throughout both stages. We assume that the probe candidates and the subsequent candidates are independent draws from the same current distribution, with $\mathcal D_{\mathrm{cal}}$ independent of both stages.

Let $\phi(X)$ be the fixed candidate representation, and let
$\mathcal W=\{w_\theta:w_\theta(x)=h_\theta(\phi(x))\in[0,W],\theta\in\Theta\}$
be a bounded density-ratio model class. We estimate the current ratio through least-squares importance fitting~\citep{JMLR:v10:kanamori09a}:
\begin{equation}
\widehat\theta_t
\in
\underset{\theta\in\Theta}{\arg\min}
\left\{
\frac{1}{2|\mathcal D_{\mathrm{src}}|}
\sum_{(X_i,r(X_i))\in\mathcal D_{\mathrm{src}}}
w_\theta(X_i)^2
-
\frac{1}{|\mathcal B_t|}
\sum_{X'\in\mathcal B_t}
w_\theta(X')
+
\lambda\Omega(\theta)
\right\},
\qquad
\widehat w_t=w_{\widehat\theta_t}.
\label{eq:online_lsif}
\end{equation}
Here, $\lambda\ge0$ controls regularization of the ratio estimator through a fixed penalty $\Omega$. The population counterpart of the first two terms equals
$\frac12\mathbb E_{\mathbf P_0^X}[(w_\theta(X)-w_t(X))^2]$
up to a constant independent of $\theta$. Equation~\ref{eq:online_lsif} therefore approximates the current ratio using source candidates and unlabeled target candidates. Repeating this update with $\mathcal B_t$ at each iteration gives an online density-ratio estimation procedure for the sequence $\{\mathbf P_t^X\}_{t=1}^{T}$.

We incorporate the estimated ratio into CCI by extending the calibration function space from Section~\ref{sec:exchangeable_cci}:
\[
\mathcal G_t
=
\mathcal G+\operatorname{span}\{\widehat w_t\}.
\]
The ratio enters as an additional basis function, while the augmented quantile regression retains its unweighted pinball loss and unpenalized calibration objective. For each fresh candidate $x$, we solve~\eqref{eq:augmented_quantile_calibration} over $\mathcal G_t$ and construct $\mathcal I_t(x)$ through~\eqref{eq:cci_interval}. Finite-dimensional CCI provides coverage under covariate shifts represented by nonnegative functions in its calibration space~\citep{gibbs2024conformalpredictionconditionalguarantees}. Adding $\widehat w_t$ therefore connects the calibration adjustment to the current generation distribution.

For the coverage analysis, suppose that
$\mathbb E_{\mathbf P_0^X}[\widehat w_t(X)]>0$
and define
\begin{equation}
\epsilon_t
=
\frac12
\mathbb E_{\mathbf P_0^X}
\left[
\left|
w_t(X)
-
\frac{\widehat w_t(X)}
{\mathbb E_{\mathbf P_0^X}[\widehat w_t(X)]}
\right|
\right].
\label{eq:weight_approximation_error}
\end{equation}
The normalized ratio defines a candidate distribution relative to $\mathbf P_0^X$, and $\epsilon_t$ is its total variation distance from $\mathbf P_t^X$. This term captures density-ratio estimation error and misspecification of the representation or ratio model. Normalization preserves membership in $\mathcal G_t$, since multiplying a basis function by a positive constant leaves its linear span unchanged.

Algorithm~\ref{alg:proxy_self_evolution_cci} summarizes the resulting search process. The current probe batch determines $\widehat w_t$ and $\mathcal G_t$, while each fresh candidate receives its own test-point calibration. The archive stores the resulting interval rewards, and the cumulative output contains candidates whose entire intervals satisfy the task-specific constraints.

\begin{algorithm}[t]
\caption{\textsc{CISE}}
\label{alg:proxy_self_evolution_cci}
\begin{algorithmic}[1]

\REQUIRE Initial archive $\mathcal H_0$; evolving agent; proxy reward
$\hat r$; data splits $\mathcal D_{\mathrm{fit}}$,
$\mathcal D_{\mathrm{cal}}$, and $\mathcal D_{\mathrm{src}}$;
base calibration space $\mathcal G$; density-ratio model class
$\mathcal W$; parent-selection policies $\{\pi_t\}_{t=0}^{T}$;
task-specific reward constraints; miscoverage level $\alpha$;
probe budget; candidate budget $M$; parent budget $K$;
number of iterations $T$

\ENSURE Interval-qualified candidate set $\mathcal O_T$

\STATE Fit the base quantile model on $\mathcal D_{\mathrm{fit}}$
and fix the score $S$

\STATE Compute $S(X_i,r(X_i))$ for every
$(X_i,r(X_i))\in\mathcal D_{\mathrm{cal}}$

\STATE Initialize the stored rewards in $\mathcal H_0$ using
the intervals from Section~\ref{sec:exchangeable_cci},
and set $\mathcal O_0\leftarrow\emptyset$

\STATE Select $K$ initial parents according to
$\pi_0(\cdot\mid\mathcal H_0)$
and denote them by $\mathcal S_0$

\FOR{$t=1,\ldots,T$}

    \STATE Fix the current generation configuration using
    $\mathcal S_{t-1}$ and their stored rewards

    \STATE Generate an unlabeled probe batch $\mathcal B_t$
    under this configuration without updating the archive

    \STATE Estimate $\widehat w_t$ from
    $\mathcal D_{\mathrm{src}}$ and $\mathcal B_t$
    using~\eqref{eq:online_lsif}

    \STATE Set
    $\mathcal G_t\leftarrow
    \mathcal G+\operatorname{span}\{\widehat w_t\}$

    \STATE Propose fresh candidates
    $\mathcal C_t=\{x_{t,1},\ldots,x_{t,M}\}$
    independently of $\mathcal B_t$
    under the same generation configuration

    \STATE Initialize
    $\mathcal H_t\leftarrow\mathcal H_{t-1}$
    and
    $\mathcal O_t\leftarrow\mathcal O_{t-1}$

    \FORALL{$x\in\mathcal C_t$}

        \STATE Evaluate $\hat r(x)$ and construct
        $\mathcal I_t(x)$ using~\eqref{eq:augmented_quantile_calibration}
        and~\eqref{eq:cci_interval} with $\mathcal G_t$

        \IF{$\mathcal I_t(x)$ is nonempty and bounded}

            \STATE Evaluate the candidate conservatively over
            $\mathcal I_t(x)$ and store it in $\mathcal H_t$

            \IF{every reward value in $\mathcal I_t(x)$
            satisfies the task-specific constraints}
                \STATE Update
                $\mathcal O_t\leftarrow\mathcal O_t\cup\{x\}$
            \ENDIF

        \ENDIF

    \ENDFOR

    \STATE Select $K$ parents according to
    $\pi_t(\cdot\mid\mathcal H_t)$
    and denote them by $\mathcal S_t$

\ENDFOR

\STATE \textbf{return} $\mathcal O_T$

\end{algorithmic}
\end{algorithm}

The following result combines the finite-dimensional CCI guarantee
with the distribution discrepancy in
Equation~\ref{eq:weight_approximation_error}.

\begin{theorem}[Iteration-wise marginal coverage under evolutionary covariate shift]
\label{thm:shift_aware_cci}

For a fixed iteration $t$, condition on the current generation state,
$\mathcal D_{\mathrm{fit}}$, $\mathcal D_{\mathrm{src}}$, and
$\mathcal B_t$. Assume that, conditionally on these quantities, the
calibration pairs remain i.i.d.\ draws from $\mathbf P_0$, and the
fresh pair $(X',r(X'))$ is an independent draw from $\mathbf P_t$
satisfying~\eqref{eq:evolutionary_covariate_shift}.
Suppose that $\widehat w_t$ is nonnegative with positive source
expectation, and that the score $S$ and finite-dimensional space
$\mathcal G_t$ are fixed under this conditioning. If the augmented
calibration over $\mathcal G_t$ is solved exactly with the symmetric
dual-solution rule in Theorem~\ref{thm:exchangeable_cci}, then
\[
\mathbb P
\left(
r(X')\in\mathcal I_t(X')
\right)
\ge
1-\alpha-\epsilon_t,
\]
where $\epsilon_t$ is defined in~\eqref{eq:weight_approximation_error}.
The probability is conditional on the quantities held fixed and
marginal over the calibration data and the fresh candidate--reward pair.

\end{theorem}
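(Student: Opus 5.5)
The plan is to combine the finite-dimensional conditional conformal guarantee of \citet{gibbs2024conformalpredictionconditionalguarantees} with a total-variation change of measure. Under the conditioning in the statement, $\mathcal G_t$ and $S$ are fixed, and the calibration pairs are i.i.d.\ from $\mathbf P_0$. Set $c_t=\mathbb E_{\mathbf P_0^X}[\widehat w_t(X)]>0$ and $f_t=\widehat w_t/c_t$. Since $\mathcal G_t$ is a linear space, $f_t\in\mathcal G_t$. Moreover, $f_t$ is nonnegative with $\mathbb E_{\mathbf P_0^X}[f_t]=1$, so it is the density of a probability measure $\widetilde{\mathbf P}_t^X$ relative to $\mathbf P_0^X$. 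Let $\widetilde{\mathbf P}_t$ be the joint law obtained by drawing $X\sim\widetilde{\mathbf P}_t^X$ and $r(X)$ from the source conditional $\mathbf P_0(r(X)\mid X)$.

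\textbf{Step 1: coverage for a test pair drawn from $\widetilde{\mathbf P}_t$.} We reuse the dual argument behind Theorem~\ref{thm:exchangeable_cci}, now weighted by $f_t$. Draw the $n_{\mathrm{cal}}+1$ augmented pairs i.i.d.\ from $\mathbf P_0$ and apply the first-order (KKT) condition of the augmented problem~\eqref{eq:augmented_quantile_calibration} over $\mathcal G_t$ in the direction $f_t$. This gives $\sum_j \eta_j f_t(X_j)=0$ at the true test score, where the $\eta_j$ are the optimal dual variables. At the true score the augmented problem is symmetric, so with the symmetric dual rule the pairs $(X_j,\eta_j)$ are exchangeable. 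Taking expectations gives $\mathbb E[f_t(X_{n+1})\eta_{n+1}]=0$. Next use the subgradient bounds: $\eta_{n+1}=1-\alpha$ whenever the residual is positive, and $\eta_{n+1}\ge-\alpha$ always. Together with $\mathbb E_{\mathbf P_0}[f_t]=1$, these give $\mathbb E_{\mathbf P_0}[f_t(X_{n+1})\1\{\eta_{n+1}=1-\alpha\}]\le\alpha$. By the definition of $\widehat\tau_t$ in~\eqref{eq:cci_interval} and the monotonicity of $\eta_s$ in $s$, miscoverage $r(X')\notin\mathcal I_t(X')$ implies $\eta_{S'}(X')=1-\alpha$. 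Reweighting by $f_t$ is exactly sampling the test covariate from $\widetilde{\mathbf P}_t^X$, with the conditional law of the reward unchanged and the calibration data independent. This is the Gibbs et al.\ covariate-shift statement for a nonnegative element of the calibration space, and it yields $\mathbb P_{\widetilde{\mathbf P}_t}(r(X')\in\mathcal I_t(X'))\ge1-\alpha$. Here the outer probability is over the calibration data together with the test pair.

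\textbf{Step 2: transfer to $\mathbf P_t$.} The true test law $\mathbf P_t$ and $\widetilde{\mathbf P}_t$ share the conditional law of $r(X)$ given $X$, by~\eqref{eq:evolutionary_covariate_shift}. Hence their total variation distance equals that of the $X$-marginals, namely $\frac12\mathbb E_{\mathbf P_0^X}|w_t-f_t|=\epsilon_t$. Condition on the calibration data, which is independent of the test pair in both cases. The event $\{r(X')\in\mathcal I_t(X')\}$ is then a fixed measurable set of the test pair, so its probabilities under the two test laws differ by at most $\epsilon_t$. Integrating over the calibration data gives $\mathbb P(r(X')\in\mathcal I_t(X'))\ge1-\alpha-\epsilon_t$.

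\textbf{Main obstacle.} The delicate part is Step 1. One must justify exchangeability of the dual variables at the true score via the symmetric dual-solution rule. One must also show that the nonrandomized cutoff $\widehat\tau_t$, defined as a supremum, makes miscoverage imply $\eta=1-\alpha$ exactly, including at the boundary and at ties. I would first handle this as in Gibbs et al. Monotonicity of $\eta_s$ gives $\{S'>\widehat\tau_t\}\subseteq\{\eta_{S'}=1-\alpha\}$, since for $s>\widehat\tau_t$ we have $\eta_s\ge1-\alpha$ while $\eta_s\le1-\alpha$ always. The remaining work is to check that no extra boundary mass appears when $w_t$ is bounded by $W$. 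That bound keeps all the expectations finite.
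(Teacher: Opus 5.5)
Your proposal is correct and follows the paper's proof essentially step for step. You normalize $\widehat w_t$ to $f_t\in\mathcal G_t$, combine dual feasibility in the direction $f_t$ with exchangeability in an auxiliary all-$\mathbf P_0$ experiment and the bound $\mathbf 1\{\eta_N=1-\alpha\}\le\eta_N+\alpha$, reread the result as coverage under $\widetilde{\mathbf P}_t$, and then pay $\epsilon_t$ in total variation using the shared conditional reward law.

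Your ``main obstacle'' items need no extra work. The inclusion $\{S'>\widehat\tau_t\}\subseteq\{\eta_{S'}=1-\alpha\}$ follows from the definition of the supremum alone, so monotonicity is not needed. Since only this inclusion is used, ties and boundary cases are already handled. Finiteness of the expectations comes from $\mathbb E_{\mathbf P_0^X}[f_t]=1$ and the bounded duals, not from $w_t\le W$.
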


The normalized estimated ratio belongs to $\mathcal G_t$, so the
finite-dimensional CCI result gives coverage of at least $1-\alpha$
under its induced candidate distribution.
The calibration distribution remains $\mathbf P_0$ in both settings,
and the conditional reward law is shared. Replacing the induced
candidate distribution by $\mathbf P_t^X$ therefore changes the
coverage probability by at most $\epsilon_t$, establishing
Theorem~\ref{thm:shift_aware_cci}. Averaging over the estimation data
gives the corresponding marginal bound with
$\mathbb E[\epsilon_t]$. The same $1-\alpha$ coverage follows directly whenever $w_t\in\mathcal G_t$.
By the output rule in Section~\ref{sec:problem_setup}, the probability
of returning a false positive is bounded by $\alpha+\epsilon_t$
under the same fixed-iteration setting.

\section{Experiments}

\subsection{Experimental Setup}
\label{sec:experimental_setup}

We evaluate CISE on three multi-constraint material-discovery tasks from the LLEMA benchmark~\citep{abhyankar2026llemaevolutionarysearchllms}: wide-bandgap semiconductors (WBG), which require $E_g \geq 2.5$~eV and $E_{\mathrm{form}} \leq -1.0$~eV/atom; solid-state electrolytes (SSE), which require $E_{\mathrm{form}} \leq -1.0$~eV/atom, $E_g \geq 2.0$~eV, and at least one mobile species from $\{\mathrm{Li},\mathrm{Na},\mathrm{K},\mathrm{Mg},\mathrm{Ca},\mathrm{Al}\}$; and photovoltaic absorbers (PV), which require $0.7 \leq E_g \leq 2.0$~eV, $E_{\mathrm{form}} \leq 0.0$~eV/atom, and compositions restricted to earth-abundant, non-toxic elements. A candidate is counted as a task hit only when all corresponding constraints are satisfied. We provide detailed descriptions of each task in Appendix~\ref{app:detailed_task_descriptions}. We set $\alpha=0.1/ \#\text{constraint}$ for each property. For example, we set $\alpha = 0.05$ for both $E_g$ and $E_{\mathrm{form}}$ in the WBG task. 

We compare CISE against MatterGen~\citep{zeni2025generative}, LLMatDesign~\citep{jia2024llmatdesignautonomousmaterialsdiscovery}, and LLEMA~\citep{abhyankar2026llemaevolutionarysearchllms}. CISE modifies only the archive-admission and output-selection steps of LLEMA; its generator and evolutionary operators remain unchanged. For proxy evaluation, we use MGT in place of CGCNN for band-gap prediction, while otherwise following the LLEMA experimental protocol, including the task definitions, hierarchical proxy-property pipeline, proxy-reward construction, candidate budgets, and method-specific generation settings. For tasks involving multiple predicted properties, we apply CISE separately to each property and return only candidates that satisfy all property-specific CISE criteria. During search, surrogate predictions provide the proxy properties used to score and qualify candidates. We then re-evaluate the candidates returned by each method using DFT implemented in Quantum ESPRESSO~\citep{Giannozzi_2009}. A proxy hit is a candidate satisfying the task constraints under surrogate predictions, whereas a DFT hit satisfies the same physical-property constraints under Quantum ESPRESSO calculations, together with any deterministic compositional restrictions. A proxy hit that fails this final evaluation is recorded as a false positive. We use gpt-5-mini~\citep{singh2026openaigpt5card} as a solver LLM to generate candidates. More detailed experimental settings are provided in Appendix~\ref{app:detailed_experimental_settings}.

\begin{table*}[t]
\centering
\caption{
Comparison of output reliability across three material-discovery tasks.
Each method generates 200 candidate structures.
\textbf{Output} denotes candidates satisfying the method-specific output criterion.
\textbf{TP} and \textbf{FP} denote output candidates that satisfy or violate
the target constraints under DFT evaluation, respectively.
Values in parentheses denote the fraction among output candidates.
}
\label{tab:main_results}

\footnotesize
\renewcommand{\arraystretch}{1.10}
\setlength{\tabcolsep}{3pt}

\begin{tabular*}{\textwidth}{
@{}l@{\extracolsep{\fill}}
ccc
ccc
ccc@{}
}
\toprule
& \multicolumn{3}{c}{\textbf{Wide-bandgap (WBG)}}
& \multicolumn{3}{c}{\textbf{Solid-state electrolytes (SSE)}}
& \multicolumn{3}{c}{\textbf{Photovoltaic absorbers (PV)}} \\
\cmidrule(lr){2-4}
\cmidrule(lr){5-7}
\cmidrule(lr){8-10}

\textbf{Method}
& Output & TP & FP
& Output & TP & FP
& Output & TP & FP \\
\midrule

MatterGen
& 3/200
& 2 (0.67)
& 1 (0.33)
& 2/200
& 1 (0.50)
& 1 (0.50)
& 0/200
& --
& -- \\

LLMatDesign
& 37/200
& 0 (0.00)
& 37 (1.00)
& 2/200
& 0 (0.00)
& 2 (1.00)
& 18/200
& 0 (0.00)
& 18 (1.00) \\

LLEMA
& 102/200
& 94 (0.92)
& 8 (0.08)
& 104/200
& 71 (0.68)
& 33 (0.32)
& 60/200
& 39 (0.65)
& 21 (0.35) \\

\textbf{CISE}
& \textbf{41/200}
& \textbf{41 (1.00)}
& \textbf{0 (0.00)}
& \textbf{51/200}
& \textbf{51 (1.00)}
& \textbf{0 (0.00)}
& \textbf{6/200}
& \textbf{6 (1.00)}
& \textbf{0 (0.00)} \\

\bottomrule
\end{tabular*}

\end{table*}

\begin{figure*}[t]
    \centering
    \includegraphics[width=\textwidth]{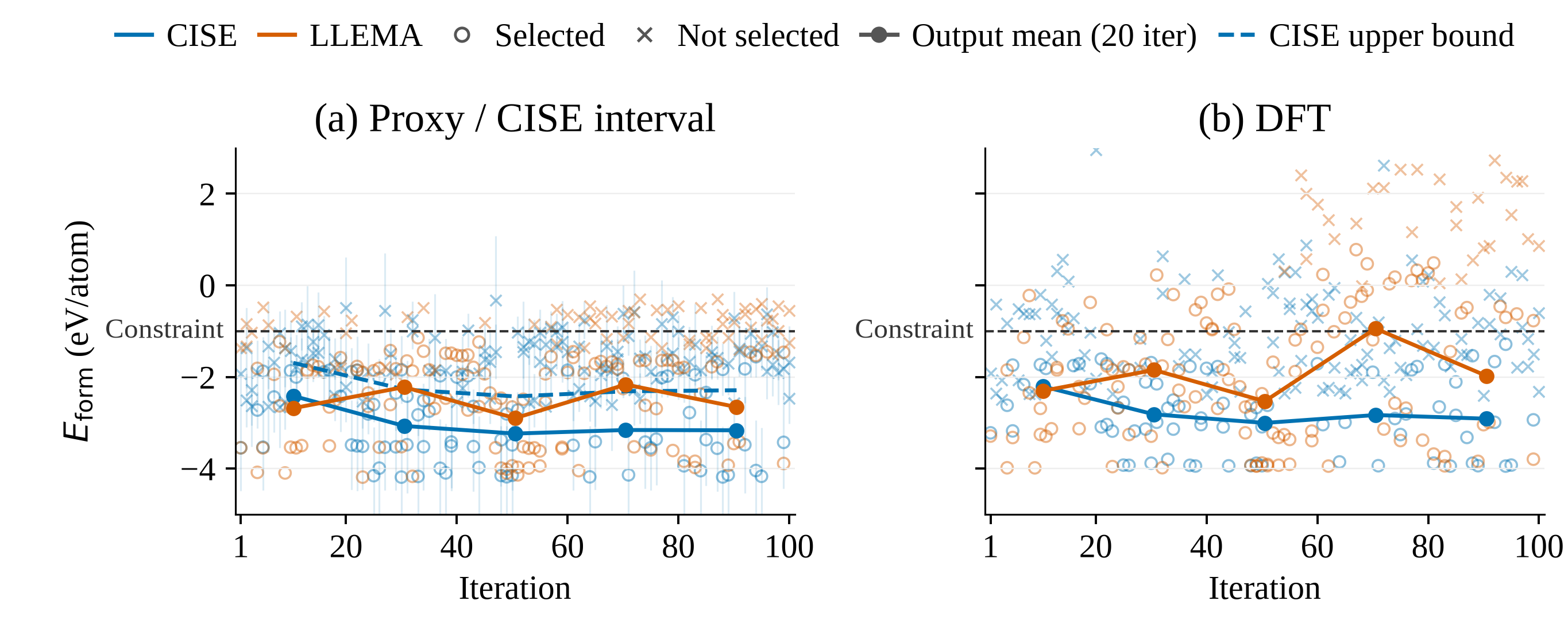}
    \caption{
    Evolution of formation energy on the solid-state electrolyte (SSE) task, where success requires $E_{\mathrm{form}}\leq -1.0$ eV/atom.
    (a) Proxy rewards for LLEMA and reward intervals for CISE; vertical blue bars denote CISE intervals and the dashed blue curve shows their mean upper bound. (b) Corresponding DFT evaluations. Circles and crosses denote selected and non-selected candidates, respectively, and solid curves show the 20-iteration output mean.
    In later iterations, LLEMA produces a pronounced concentration of candidates above the DFT success threshold, visible in the upper-right of (b), including selected false positives. In contrast, CISE shows no comparable accumulation and its output mean remains within the feasible region throughout evolution.
    }
    \label{fig:sse_eform_evolution}
\end{figure*}

\subsection{Empirical results}
\textbf{Main results.} Table~\ref{tab:main_results} shows a trade-off between the number of returned structures and their DFT results. This contrast is particularly clear between LLEMA and CISE on WBG and SSE. LLEMA returns 102 and 104 candidates, respectively, but 7.84\% and 31.73\% fail DFT validation. In contrast, CISE returns 41 WBG candidates and 51 SSE candidates, with no observed false positives among candidates with resolved DFT evaluations: all 41 WBG candidates and 51 SSE candidates are confirmed true positives. A smaller but more reliable shortlist can be valuable when downstream experimental validation is the bottleneck. The objective is not simply to maximize the number of structures satisfying proxy constraints, but to prioritize candidates worth further investigation. Under a limited validation budget, higher output precision can reduce the fraction of resources allocated to candidates that do not satisfy the intended properties. We therefore view the WBG and SSE results as a reliability-yield trade-off. This trade-off is consistent with CISE's joint interval-based selection rule. Whereas proxy-based selection checks whether each point prediction satisfies its constraint, CISE requires the entire prediction interval of every required property to lie within the corresponding feasible region. A candidate can therefore satisfy all point constraints yet be excluded because at least one interval extends beyond an admissible boundary.

\textbf{Calibrated interval rewards facilitate reliable LLM evolution.}
Figure~\ref{fig:sse_eform_evolution} shows a clear difference in the search trajectories of LLEMA and CISE on the SSE task. Although LLEMA continues to assign favorable proxy rewards, its DFT-evaluated candidates increasingly populate the failure region in later iterations.
This is particularly visible in the upper-right of Figure~\ref{fig:sse_eform_evolution}(b), where many late-iteration LLEMA candidates lie above the $E_{\mathrm{form}}=-1.0$ threshold, including selected false positives. In contrast, CISE shows no comparable accumulation, and its output mean remains within the feasible region throughout evolution. This pattern suggests that calibrated interval rewards help prevent overly optimistic proxy evaluations from being repeatedly reinforced during LLM evolution.

\textbf{Comparison of fixed-budget shortlists of LLEMA and CISE.} To examine whether the higher reliability of CISE is simply a consequence of returning fewer candidates, we perform an equal-budget comparison on the PV task. CISE returns six candidates in this experiment, so we restrict LLEMA to the same downstream validation budget of six structures. Among the proxy-qualified LLEMA
outputs, we select the six candidates whose proxy-predicted band gaps are closest to $1.2$ eV, representing a natural point-estimate-based prioritization strategy for PV screening. Importantly, this selection is performed before DFT evaluation, so the true-positive and false-positive labels are unavailable when the candidates are ranked. As shown in Table~\ref{tab:pv_equal_budget}, only three of the six selected LLEMA candidates satisfy the PV band-gap constraint after DFT evaluation, while the remaining three are false positives. In contrast, all six candidates returned by CISE are confirmed as true positives. This result indicates that the reliability gain of CISE cannot be explained solely by its smaller output set: even under the
same validation budget and a stricter post-hoc truncation of the LLEMA output, point-estimate-based prioritization can still select candidates whose apparent quality is caused by proxy error, whereas interval-based selection excludes such uncertain candidates before expensive validation.

\begin{table}[t]
\centering
\caption{
Equal post-evaluation budget comparison on the photovoltaic absorber (PV) task. To match the six candidates returned by CISE, we select six candidates from the LLEMA output whose proxy-predicted band gaps are closest to the target value of $1.2$ eV. Importantly, this ranking is performed before DFT evaluation and therefore does not use the TP/FP labels. Under the same validation budget, three of the six LLEMA candidates are false positives under DFT, whereas all six candidates returned by CISE are true positives.
}
\label{tab:pv_equal_budget}
\resizebox{\linewidth}{!}{
\begin{tabular}{c l c c c | l c c c}
\toprule
& \multicolumn{4}{c|}{LLEMA: top-6 closest to $1.2$ eV}
& \multicolumn{4}{c}{CISE: six returned candidates} \\
\cmidrule(lr){2-5}
\cmidrule(lr){6-9}
Rank
& Material
& Proxy $E_g$
& DFT $E_g$
& Result
& Material
& CISE interval
& DFT $E_g$
& Result \\
\midrule
1
& SrO
& 1.24
& 2.99
& FP
& RbLiO
& $[0.73,\,1.98]$
& 1.24
& TP \\

2
& ZnF$_2$
& 1.25
& 2.32
& FP
& RbNa$_2$OF
& $[1.34,\,1.51]$
& 1.32
& TP \\

3
& MgTiO$_3$
& 1.37
& 1.65
& TP
& KNa$_2$OF
& $[1.19,\,1.62]$
& 1.50
& TP \\

4
& MgO
& 1.54
& 1.39
& TP
& Na$_3$Li$_3$P$_2$
& $[0.92,\,1.72]$
& 1.08
& TP \\

5
& MgZrO$_3$
& 1.57
& 1.07
& TP
& NaLi(MgP)$_2$
& $[0.83,\,1.86]$
& 1.66
& TP \\

6
& CaO
& 1.75
& 3.06
& FP
& KLi$_5$P$_2$
& $[0.89,\,1.73]$
& 1.00
& TP \\
\midrule
\multicolumn{2}{r}{DFT outcome}
& \multicolumn{3}{c|}{3 TP / 3 FP}
& \multicolumn{4}{c}{6 TP / 0 FP} \\
\bottomrule
\end{tabular}
}
\end{table}

\section{Conclusion}
In this work, we study reliable self-evolution when accurate true rewards are too expensive to obtain throughout iterative search. We propose CISE, which replaces point-valued proxy rewards with calibrated reward intervals constructed using conditional conformal inference and iteration-wise online density-ratio estimation. These intervals are used for both evolutionary feedback and final candidate selection, allowing proxy-reward uncertainty to directly influence the search process. Under explicit independence and covariate-shift assumptions, we provide fixed-iteration coverage guarantees and characterize the effect of density-ratio estimation error. Across three material-discovery tasks, CISE consistently returns a smaller but more reliable set of candidates, with no observed false positives among its DFT-validated outputs and improved reliability under a matched validation budget. Our results suggest that calibrated interval rewards provide a practical route toward more reliable self-evolving agents when true rewards are costly to obtain.

\newpage
\bibliography{CISE}
\bibliographystyle{CISE}

\newpage
\appendix
\onecolumn

\begin{algorithm}[t]
\caption{\textsc{Proxy-Reward-Driven Self-Evolving Search}}
\label{alg:proxy_self_evolution}
\begin{algorithmic}[1]

\REQUIRE Initial archive $\mathcal H_0$; evolving agent; proxy reward
$\hat r$; parent-selection policies $\{\pi_t\}_{t=0}^{T}$;
proxy-reward constraints; candidate budget $M$; parent budget $K$;
number of iterations $T$

\ENSURE Proxy-qualified candidate set $\mathcal O_T$

\STATE Initialize $\mathcal O_0\leftarrow\emptyset$

\STATE Select $K$ initial parents according to
$\pi_0(\cdot\mid\mathcal H_0)$ and denote them by $\mathcal S_0$

\FOR{$t=1,\ldots,T$}

    \STATE Propose
    $\mathcal C_t=\{x_{t,1},\ldots,x_{t,M}\}$
    using $\mathcal S_{t-1}$ and their stored proxy rewards

    \STATE Evaluate $\hat r(x)$ for every $x\in\mathcal C_t$

    \STATE Update
    $\mathcal H_t\leftarrow
    \mathcal H_{t-1}\cup
    \{(x,\hat r(x)):x\in\mathcal C_t\}$

    \STATE Update
    $\mathcal O_t\leftarrow
    \mathcal O_{t-1}\cup
    (\mathcal C_t\cap\mathcal F_{\hat r})$

    \STATE Select $K$ parents according to
    $\pi_t(\cdot\mid\mathcal H_t)$
    and denote them by $\mathcal S_t$

\ENDFOR

\STATE \textbf{return} $\mathcal O_T$

\end{algorithmic}
\end{algorithm}

\begin{table*}[t]
\centering
\caption{Notation used for proxy-reward-driven material evolution
and conditional conformal inference.}
\label{tab:evolution_notation}
\small
\setlength{\tabcolsep}{6pt}
\renewcommand{\arraystretch}{1.12}
\begin{tabularx}{\textwidth}{@{}p{0.23\textwidth}X@{}}
\toprule
Notation & Description \\
\midrule

\multicolumn{2}{@{}l}{\textit{Material search and rewards}}\\
\addlinespace[2pt]

$\mathcal X$; $X,x$; $X'$
&
Space of material candidates; a random candidate and its
realization; a fresh candidate at the current iteration.
\\

$\hat r(x)$; $r(x)$
&
Computationally inexpensive proxy reward and expensive true
reward of candidate $x$.
\\

$\mathcal H_t$
&
Archive after iteration $t$, containing candidates and their
stored rewards. Algorithm~\ref{alg:proxy_self_evolution} stores
proxy rewards, while Algorithm~\ref{alg:proxy_self_evolution_cci}
stores rewards evaluated conservatively over prediction intervals.
\\

$\pi_t(\cdot\mid\mathcal H_t)$; $\mathcal S_t$
&
Parent-selection policy applied to $\mathcal H_t$ and the
resulting set of $K$ parents used to guide the following iteration.
\\

$\mathcal C_t$; $\mathcal O_t$
&
Candidate batch proposed at iteration $t$ and cumulative output
set through that iteration.
\\

$\mathcal F_{\hat r}$; $\mathcal F_r$
&
Sets of candidates satisfying the task-specific constraints
according to the proxy reward and the true reward, respectively.
\\

$M$; $K$; $T$
&
Number of fresh candidates evaluated per iteration, number of
selected parents, and number of iterations.
\\

\addlinespace[3pt]
\multicolumn{2}{@{}l}{\textit{Conditional conformal inference}}\\
\addlinespace[2pt]

$\mathcal D$; $n$; $n_{\mathrm{cal}}$
&
Labeled source dataset
$\mathcal D=\{(X_i,r(X_i))\}_{i=1}^{n}$,
its size, and the number of calibration pairs.
\\

$\mathcal D_{\mathrm{fit}}$;
$\mathcal D_{\mathrm{cal}}$;
$\mathcal D_{\mathrm{src}}$
&
Mutually disjoint subsets used to fit the base quantile model,
calibrate prediction intervals, and provide the source reference
for density-ratio estimation.
\\

$\mathcal I_t(x)$
&
Prediction interval for $r(x)$ constructed at iteration $t$.
A candidate is returned only when its interval is nonempty,
bounded, and satisfies the task-specific constraints throughout.
\\

$\alpha$
&
Nominal marginal miscoverage level for a fresh candidate,
evaluated separately at each iteration.
\\

$S(x,u)$; $S_i$
&
Fixed nonconformity score for a hypothetical reward $u$ and
calibration score $S_i=S(X_i,r(X_i))$.
\\

$\mathcal G$; $\mathcal G_t$
&
Base finite-dimensional calibration space containing the
constant function and its iteration-specific extension
$\mathcal G_t=\mathcal G+\operatorname{span}\{\widehat w_t\}$.
\\

$s$; $\widehat q_s$
&
Imputed test score and fitted calibration function obtained
from augmented quantile regression with that score.
\\

$\ell_\alpha(a,b)$
&
Pinball loss for the $(1-\alpha)$-quantile,
$\ell_\alpha(a,b)=(1-\alpha)(b-a)_++\alpha(a-b)_+$.
\\

$\eta_s(X')$
&
Dual variable associated with the fresh candidate in the
augmented calibration problem, taking values in
$[-\alpha,1-\alpha]$.
\\

$\widehat\tau_t(x)$
&
Candidate-specific score cutoff obtained from the augmented
dual boundary and inverted to construct $\mathcal I_t(x)$.
\\

\addlinespace[3pt]
\multicolumn{2}{@{}l}{\textit{Evolutionary covariate-shift correction}}\\
\addlinespace[2pt]

$\mathbf P_0$; $\mathbf P_t$
&
Joint distributions of a candidate and its true reward in the
source data and under the fixed generation configuration at
iteration $t$, respectively.
\\

$\mathbf P_0^X$; $\mathbf P_t^X$
&
Marginal distributions over material candidates under
$\mathbf P_0$ and $\mathbf P_t$.
\\

$\phi(x)$
&
Fixed material representation used in the calibration basis
and density-ratio estimation.
\\

$\mathcal B_t$
&
Unlabeled probe batch generated under the current fixed
configuration and used only for density-ratio estimation.
\\

$w_t(x)$; $\widehat w_t(x)$
&
Oracle density ratio
$w_t(x)=\frac{d\mathbf P_t^X}{d\mathbf P_0^X}(x)$
and its online estimate from
$\mathcal D_{\mathrm{src}}$ and $\mathcal B_t$.
\\

$\mathcal W$; $W$
&
Density-ratio model class with values in $[0,W]$ and the
bounded-overlap constant satisfying $w_t(x)\le W$.
\\

$\epsilon_t$
&
Total variation distance between $\mathbf P_t^X$ and the
candidate distribution induced by the normalized estimated
ratio, as defined in~\eqref{eq:weight_approximation_error}.
\\

\bottomrule
\end{tabularx}
\end{table*}

\section{Detailed experimental settings}
\label{app:detailed_experimental_settings}

\subsection{Implementations}
\textbf{Calibration set.}
We construct the calibration set $\mathcal{D}_{\mathrm{cal}}$ using 10,000 material structures from JARVIS-QE-TB (QETB)~\citep{Choudhary_2025}. The calibration samples are separated from the model-fitting data at the composition-group level to avoid overlap between the two sets.

\textbf{Model-fitting set.}
We use $\mathcal{D}_{\mathrm{fit}}$ consisting of 423,205 structures from QETB to train the conditional quantile regression model. The proxy models remain fixed throughout the evolution process.

\textbf{Probe generation.}
At each generation context, we generate eight valid probe structures using the same parent structures and prompt context as the main candidate generation. These probes are used to characterize the current candidate distribution for online density-ratio estimation.

\textbf{Conditional quantile regression.}
We implement the conditional quantile regression model using a gradient-boosted regression model (HistGradientBoostingRegressor) with quantile loss. The model is trained to predict the conditional $0.95$ quantile of absolute proxy error. Its input consists of 27 structural features together with the corresponding proxy prediction.

\textbf{CCI features.}
For conditional conformal inference, the structural features are used to capture heterogeneous proxy uncertainty across materials. The online density ratio estimated from the probe structures and a fixed source set is additionally incorporated into the finite-dimensional Gibbs calibration basis, allowing the calibration procedure to adapt to the candidate distribution at the current generation step.

\textbf{DFT evaluation.}
Selected material structures are evaluated using Quantum ESPRESSO under a PBEsol fixed-structure protocol consistent with the QETB calculations. We use the Modified GBRV PBEsol pseudopotentials and evaluate the target material properties independently from the proxy models.

\subsection{Detailed task descriptions}
\label{app:detailed_task_descriptions}
\begin{itemize}
    \item \textbf{Wide-bandgap semiconductors (WBG):}
    the target constraints are
    $E_g\geq2.5$~eV and
    $E_{\mathrm{form}}\leq-1.0$~eV/atom.
    Here,
    $\hat r(X)=(\widehat E_g,\widehat E_{\mathrm{form}})$
    and
    $r(X)=(E_g,E_{\mathrm{form}})$,
    where the latter quantities are obtained by DFT.
    Since a larger predicted $E_g$ and a smaller predicted
    $E_{\mathrm{form}}$ can make a candidate appear more favorable,
    \[
    Y(X)
    =
    \mathbb I
    \left\{
    \widehat E_g-E_g\leq\delta_{E_g},
    \;
    E_{\mathrm{form}}-\widehat E_{\mathrm{form}}
    \leq\delta_{E_{\mathrm{form}}}
    \right\}.
    \]

    \item \textbf{Solid-state electrolytes (SSE):}
    the constraints are
    $E_{\mathrm{form}}\leq-1.0$~eV/atom,
    $E_g\geq2.0$~eV, and the presence of at least one mobile species from
    $\{\mathrm{Li},\mathrm{Na},\mathrm{K},\mathrm{Mg},\mathrm{Ca},\mathrm{Al}\}$.
    We use
    $\hat r(X)=(\widehat E_g,\widehat E_{\mathrm{form}})$
    and
    $r(X)=(E_g,E_{\mathrm{form}})$,
    with
    \[
    Y(X)
    =
    \mathbb I
    \left\{
    \widehat E_g-E_g\leq\delta_{E_g},
    \;
    E_{\mathrm{form}}-\widehat E_{\mathrm{form}}
    \leq\delta_{E_{\mathrm{form}}}
    \right\}.
    \]
    The mobile-species requirement is determined directly from the
    composition and therefore does not require proxy calibration.

    \item \textbf{Photovoltaic absorbers (PV):}
    the constraints are
    $0.7\leq E_g\leq2.0$~eV,
    $E_{\mathrm{form}}\leq0.0$~eV/atom, and compositions restricted to
    earth-abundant, non-toxic elements.
    We again use
    $\hat r(X)=(\widehat E_g,\widehat E_{\mathrm{form}})$
    and
    $r(X)=(E_g,E_{\mathrm{form}})$.
    Because $E_g$ is constrained on both sides, its proxy error is controlled
    in both directions:
    \[
    Y(X)
    =
    \mathbb I
    \left\{
    |\widehat E_g-E_g|\leq\delta_{E_g},
    \;
    E_{\mathrm{form}}-\widehat E_{\mathrm{form}}
    \leq\delta_{E_{\mathrm{form}}}
    \right\}.
    \]
    The elemental restriction is evaluated directly from the composition.

\end{itemize}

\begin{figure}[!t]
    \centering
    \includegraphics[width=\linewidth]{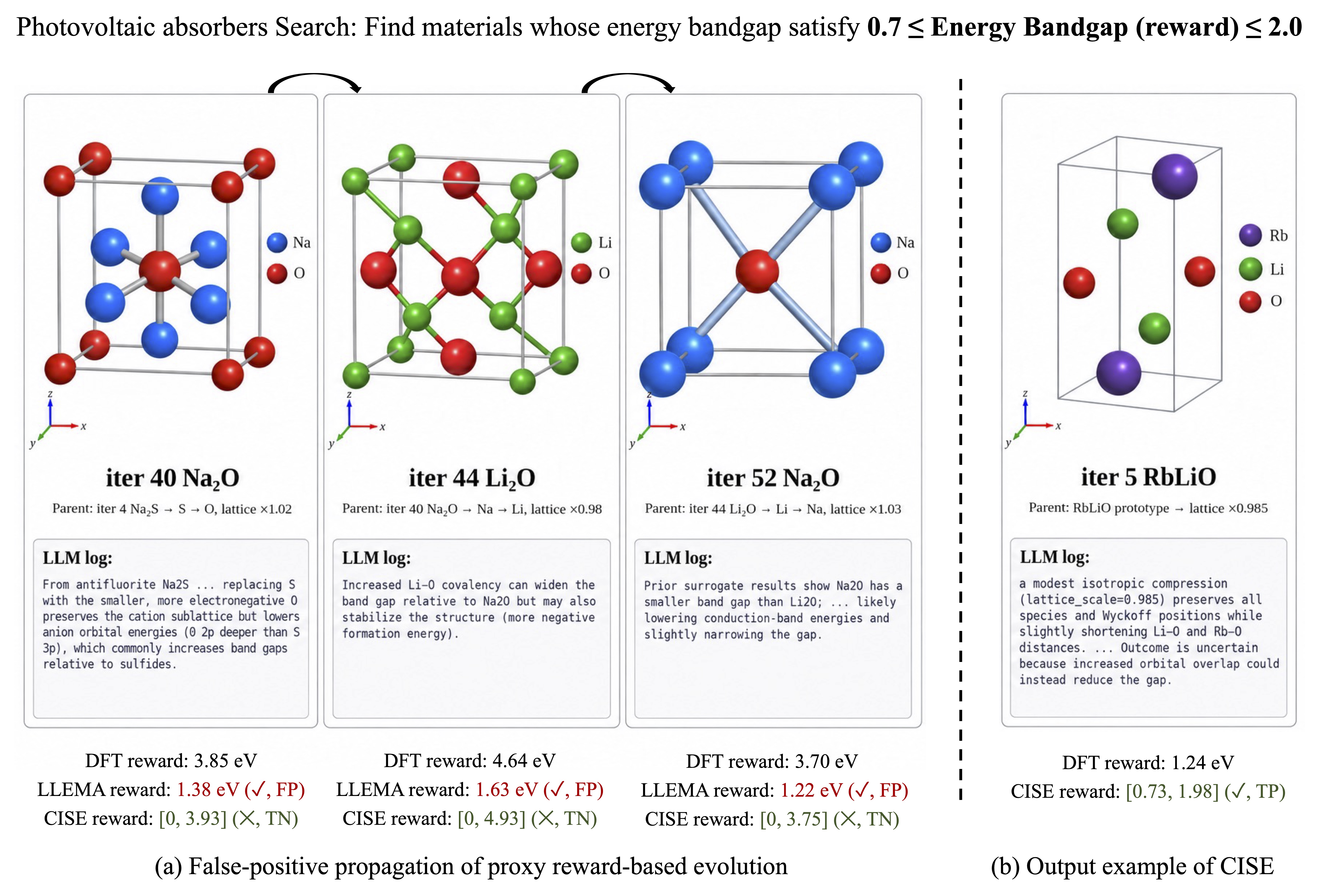}
    \caption{
    Qualitative comparison of proxy-driven evolution and CISE on the photovoltaic absorber task, with target band gap $0.7 \leq E_g \leq 2.0$~eV.
    (a) LLEMA reuses proxy-qualified candidates along the parent--child lineage $\mathrm{Na_2O} \rightarrow \mathrm{Li_2O} \rightarrow
    \mathrm{Na_2O}$. All three candidates satisfy the point-estimate constraint but fail DFT validation (FP). The recorded LLM design justifications illustrate how previous surrogate evaluations
    inform subsequent proposals, allowing false positives to become reusable search feedback. The displayed CISE intervals extend beyond the target range, so these candidates would be excluded
    by the interval-based output rule. (b) A $\mathrm{RbLiO}$ candidate from a separate CISE run has its entire band-gap interval within the target range and also satisfies the constraint under DFT evaluation. Arrows indicate parent--child relations; checkmarks and crosses indicate acceptance and rejection, respectively.
    }
    \label{fig:pv_qualitative}
\end{figure}

\section{Additional Empirical Results}

\textbf{Interval rewards mitigate false-positive propagation.} Figure~\ref{fig:pv_qualitative} qualitatively illustrates how proxy errors can propagate through self-evolving search and how CISE changes this feedback process. In the LLEMA trajectory in Figure~\ref{fig:pv_qualitative}(a), the agent repeatedly reuses proxy-qualified candidates along the parent-child lineage $\mathrm{Na_2O} \rightarrow \mathrm{Li_2O} \rightarrow \mathrm{Na_2O}$. Although all three candidates satisfy the photovoltaic band-gap constraint according to their proxy predictions, DFT evaluation shows that each is a false positive. Importantly, the recorded LLM rationales explicitly refer to previous surrogate evaluations when proposing subsequent candidates, showing that an erroneous proxy reward can become reusable feedback rather than remaining an isolated output error. In contrast, the corresponding CISE intervals extend beyond the feasible band-gap range, preventing these candidates from being treated as reliable successes under the interval-based criterion. Figure~\ref{fig:pv_qualitative}(b) shows the complementary case: the full CISE interval for $\mathrm{RbLiO}$ lies within the target range, and the candidate is subsequently confirmed as a true positive by DFT. Together, these examples illustrate how interval rewards can suppress the propagation of uncertain proxy successes while retaining candidates whose uncertainty-aware rewards remain feasible.

\section{Proofs}
\label{app:cci_proofs}

Throughout this section, we fix an iteration $t$ and write
$N=n_{\mathrm{cal}}+1$.
All scores, basis functions, and solution rules are taken to be
measurable.
Symmetry of the dual-solution rule means that permuting the augmented
observations permutes the selected dual coordinates in the same way.
We interpret the supremum in
Equation~\ref{eq:cci_interval} in the extended real line, with
$\sup\emptyset=-\infty$.

\subsection{Proof of Theorem~\ref{thm:exchangeable_cci}}
\label{app:proof_exchangeable_cci}

\begin{proof}
Condition on the independently fixed score $S$ and calibration
space $\mathcal G$.
Set $X_N=X'$ and write
$S_i=S(X_i,r(X_i))$ for $i=1,\ldots,N$.
At the true test score $S_N$, the augmented calibration problem in~\eqref{eq:augmented_quantile_calibration} becomes
\[
    \min_{q\in\mathcal G}
    \sum_{i=1}^{N}\ell_\alpha(q(X_i),S_i).
\]
The pinball loss admits the representation
\[
    \ell_\alpha(a,b)
    =
    \max_{-\alpha\leq\eta\leq1-\alpha}\eta(b-a).
\]
Since $\mathcal G$ is a finite-dimensional linear space,
the augmented problem is a finite-dimensional linear program.
It is feasible and has a finite optimal value: its objective is
nonnegative, and the choice $q=0$ gives a finite value.
Linear-programming duality therefore yields the dual problem
\begin{equation}
\begin{aligned}
    \max_{\eta\in\mathbb R^N}\quad
        &\sum_{i=1}^{N}\eta_i S_i \\
    \text{subject to}\quad
        &-\alpha\leq\eta_i\leq1-\alpha,
          \qquad i=1,\ldots,N,\\
        &\sum_{i=1}^{N}\eta_i g(X_i)=0,
          \qquad \text{for every }g\in\mathcal G.
\end{aligned}
\label{eq:app_augmented_dual}
\end{equation}
The last constraint follows because the coefficients of
$q\in\mathcal G$ are unrestricted.
The dual feasible set is nonempty and compact, so an optimal
dual solution exists.

Let $\eta=(\eta_1,\ldots,\eta_N)$ be the optimal dual solution
selected by the symmetric rule.
Since the constant function belongs to $\mathcal G$,
\eqref{eq:app_augmented_dual} implies
\[
    \sum_{i=1}^{N}\eta_i=0.
\]
The box constraints also give, for every $i$,
\[
    \mathbf 1\{\eta_i=1-\alpha\}
    \leq \eta_i+\alpha.
\]
Consequently,
\[
    \sum_{i=1}^{N}\mathbf 1\{\eta_i=1-\alpha\}
    \leq
    \sum_{i=1}^{N}(\eta_i+\alpha)
    =N\alpha.
\]

Under the assumptions of the theorem, the $N$ augmented pairs
are independent draws from $\mathbf P_0$.
The fixed score and symmetric dual-solution rule therefore make
the selected dual coordinates exchangeable.
It follows that
\[
\begin{aligned}
    \mathbb P(\eta_N=1-\alpha)
    &=
    \frac{1}{N}
    \mathbb E\left[
        \sum_{i=1}^{N}\mathbf 1\{\eta_i=1-\alpha\}
    \right]\\
    &\leq\alpha.
\end{aligned}
\]

At the true test score, $\eta_N=\eta_{S_N}(X_N)$.
If $\eta_N<1-\alpha$, then $S_N$ belongs to the set whose
supremum defines $\widehat\tau_t(X_N)$.
Thus,
$S_N\leq\widehat\tau_t(X_N)$, which implies
$r(X_N)\in\mathcal I_t(X_N)$ by~\eqref{eq:cci_interval}.
We therefore have the pointwise inequalities
\begin{equation}
    \mathbf 1\{r(X_N)\notin\mathcal I_t(X_N)\}
    \leq
    \mathbf 1\{\eta_N=1-\alpha\}
    \leq
    \eta_N+\alpha.
    \label{eq:app_dual_miscoverage}
\end{equation}
Combining the first inequality with the boundary-probability
bound gives
\[
    \mathbb P\bigl(r(X')\notin\mathcal I_t(X')\bigr)
    \leq\alpha.
\]
Averaging over any independently fitted quantities conditioned
on above proves the stated guarantee.

Only an inclusion between the miscoverage event and the dual
boundary event is needed.
In particular, the argument remains valid when the true score
equals the cutoff or when optimal solutions are not unique,
provided that the selected dual solution follows the stated
symmetric rule.
\end{proof}

We also verify the monotonicity of $\eta_s(X')$ used in
Section~\ref{sec:exchangeable_cci}.
Fix the calibration observations, the fresh candidate $X'$,
and the calibration space.
For $s_1<s_2$, let $\eta^{(1)}$ and $\eta^{(2)}$ be optimal dual
solutions with imputed test scores $s_1$ and $s_2$, respectively.
The dual feasible set in~\eqref{eq:app_augmented_dual} does not depend on the
imputed score.
Optimality therefore gives
\[
\begin{aligned}
    \sum_{i=1}^{N-1}S_i\eta_i^{(1)}
        +s_1\eta_N^{(1)}
    &\geq
    \sum_{i=1}^{N-1}S_i\eta_i^{(2)}
        +s_1\eta_N^{(2)},\\
    \sum_{i=1}^{N-1}S_i\eta_i^{(2)}
        +s_2\eta_N^{(2)}
    &\geq
    \sum_{i=1}^{N-1}S_i\eta_i^{(1)}
        +s_2\eta_N^{(1)}.
\end{aligned}
\]
Adding these inequalities yields
\[
    (s_2-s_1)
    \bigl(\eta_N^{(2)}-\eta_N^{(1)}\bigr)
    \geq0.
\]
Hence $\eta_{s_2}(X')\geq\eta_{s_1}(X')$.
This argument applies to any choices of optimal dual solutions
and remains valid when $\mathcal G$ is replaced by a fixed
$\mathcal G_t$.

\subsection{Proof of Theorem~\ref{thm:shift_aware_cci}}
\label{app:proof_shift_aware_cci}

\begin{proof}
Condition on the current generation state,
$\mathcal D_{\mathrm{fit}}$, $\mathcal D_{\mathrm{src}}$,
and $\mathcal B_t$, as specified in the theorem.
All probabilities and expectations below are conditional on
these quantities unless stated otherwise.
In particular, the score $S$, estimated ratio $\widehat w_t$,
and calibration space $\mathcal G_t$ are fixed.
By assumption, the calibration pairs retain their independent
$\mathbf P_0$ laws under this conditioning.

Define
\[
    \mu_t
    =
    \mathbb E_{\mathbf P_0^X}[\widehat w_t(X)],
    \qquad
    \widetilde w_t(x)
    =
    \frac{\widehat w_t(x)}{\mu_t}.
\]
The bounded ratio model and the positive-source-expectation
assumption give $0<\mu_t<\infty$.
Therefore, $\widetilde w_t$ is nonnegative and satisfies
\[
    \mathbb E_{\mathbf P_0^X}[\widetilde w_t(X)]=1,
    \qquad
    \widetilde w_t\in\mathcal G_t.
\]
The latter inclusion follows from
$\widehat w_t\in\mathcal G_t$ and linearity of $\mathcal G_t$.

Let $\widetilde{\mathbf P}_t$ be the joint distribution defined by
\[
    \frac{d\widetilde{\mathbf P}_t}{d\mathbf P_0}(x,u)
    =
    \widetilde w_t(x).
\]
This is a probability distribution because
$\widetilde w_t$ is nonnegative and has source expectation one.
It changes only the candidate marginal while preserving the
conditional reward law of $\mathbf P_0$.

First consider an auxiliary experiment in which all $N$ augmented
pairs are independent draws from $\mathbf P_0$.
Let $\mathbb E_0$ denote expectation in this experiment, with the
conditioned quantities above still held fixed.
For each realization of the augmented sample, solve the same
calibration problem over $\mathcal G_t$ at the true test score.
Let $\eta$ denote the selected optimal dual solution.

The dual formulation
in~\eqref{eq:app_augmented_dual} applies with
$\mathcal G_t$ in place of $\mathcal G$.
Since $\widetilde w_t\in\mathcal G_t$, dual feasibility gives
\[
    \sum_{i=1}^{N}\widetilde w_t(X_i)\eta_i=0.
\]
Under the auxiliary source experiment, the augmented pairs are
exchangeable.
Since $\widetilde w_t$ is fixed and the dual-solution rule is
symmetric, the pairs
$(\widetilde w_t(X_i),\eta_i)$ are also exchangeable.
Consequently,
\[
\begin{aligned}
    \mathbb E_0[\widetilde w_t(X_N)\eta_N]
    &=
    \frac{1}{N}
    \mathbb E_0\left[
        \sum_{i=1}^{N}\widetilde w_t(X_i)\eta_i
    \right]\\
    &=0.
\end{aligned}
\]
These expectations are well-defined because the dual variables
are bounded and $\mathbb E_0[\widetilde w_t(X_N)]=1$.

The pointwise bound
in~\eqref{eq:app_dual_miscoverage} also applies to the
calibration problem over $\mathcal G_t$.
Multiplying it by $\widetilde w_t(X_N)\geq0$ and taking
expectations gives
\begin{equation}
\begin{aligned}
    &\mathbb E_0\left[
        \widetilde w_t(X_N)
        \mathbf 1\{r(X_N)\notin\mathcal I_t(X_N)\}
    \right]\\
    &\qquad\leq
    \mathbb E_0\left[
        \widetilde w_t(X_N)(\eta_N+\alpha)
    \right]
    =\alpha.
\end{aligned}
\label{eq:app_weighted_miscoverage}
\end{equation}

Changing measure for the last observation only, the left-hand
side of~\eqref{eq:app_weighted_miscoverage} is exactly
the miscoverage probability when the calibration pairs are
independent draws from $\mathbf P_0$ and the independent fresh
pair has law $\widetilde{\mathbf P}_t$.
Thus, the prediction interval has miscoverage at most $\alpha$
under this auxiliary target law.
The fitted score, estimated ratio, calibration space, and
interval-construction rule remain unchanged in this comparison.

It remains to replace $\widetilde{\mathbf P}_t$ by the actual
target distribution $\mathbf P_t$.
The covariate-shift assumption in~\eqref{eq:evolutionary_covariate_shift} implies
\[
    \frac{d\mathbf P_t}{d\mathbf P_0}(x,u)=w_t(x).
\]
Since both target laws preserve the conditional reward law of
$\mathbf P_0$, their total variation distance is
\[
\begin{aligned}
    d_{\mathrm{TV}}
    \bigl(\mathbf P_t,\widetilde{\mathbf P}_t\bigr)
    &=
    \frac12
    \mathbb E_{\mathbf P_0^X}
    \left[
        |w_t(X)-\widetilde w_t(X)|
    \right]\\
    &=\epsilon_t,
\end{aligned}
\]
where the last equality follows from~\eqref{eq:weight_approximation_error}.

The calibration sample has the same independent law under both
target distributions.
Therefore,
\[
\begin{aligned}
    &d_{\mathrm{TV}}\left(
        \mathbf P_0^{\otimes n_{\mathrm{cal}}}\otimes\mathbf P_t,\,
        \mathbf P_0^{\otimes n_{\mathrm{cal}}}
            \otimes\widetilde{\mathbf P}_t
    \right)\\
    &\qquad=
    d_{\mathrm{TV}}
    \bigl(\mathbf P_t,\widetilde{\mathbf P}_t\bigr)
    =\epsilon_t.
\end{aligned}
\]
Applying the defining total-variation bound to the miscoverage
event shows that replacing the auxiliary target law by
$\mathbf P_t$ increases its probability by at most $\epsilon_t$.
Combining this with~\eqref{eq:app_weighted_miscoverage} yields
\[
    \mathbb P\bigl(r(X')\notin\mathcal I_t(X')\bigr)
    \leq\alpha+\epsilon_t,
\]
or equivalently,
\[
    \mathbb P\bigl(r(X')\in\mathcal I_t(X')\bigr)
    \geq1-\alpha-\epsilon_t.
\]
This is the conditional guarantee stated in the theorem.

When the stated conditional assumptions hold almost surely,
averaging over all quantities conditioned on above gives the
unconditional bound
\[
    \mathbb P\bigl(r(X')\in\mathcal I_t(X')\bigr)
    \geq1-\alpha-\mathbb E[\epsilon_t].
\]

Finally, if $w_t\in\mathcal G_t$, the weighted argument can be
applied directly with $w_t$ in place of $\widetilde w_t$.
Indeed, $w_t$ is nonnegative and
$\mathbb E_{\mathbf P_0^X}[w_t(X)]=1$, while dual feasibility
gives $\sum_{i=1}^{N}w_t(X_i)\eta_i=0$.
Changing measure then leads directly to the actual target law
$\mathbf P_t$, so no total-variation comparison is needed.
The resulting coverage is at least $1-\alpha$.
In particular, this applies when $\widehat w_t=w_t$, in which
case $\epsilon_t=0$.
\end{proof}

For completeness, the false-positive implication follows directly
from the output rule.
Let $A_t$ be the acceptance event defined in
Section~\ref{sec:problem_setup}.
On $A_t$, every reward value in $\mathcal I_t(X')$ satisfies
the task-specific constraints.
If $r(X')$ also belongs to this interval, then
$X'\in\mathcal F_r$.
Hence
\[
    A_t\cap\{X'\notin\mathcal F_r\}
    \subseteq
    \{r(X')\notin\mathcal I_t(X')\}.
\]
Under the assumptions of
Theorem~\ref{thm:exchangeable_cci}, this gives
$\mathbb P(A_t,\,X'\notin\mathcal F_r)\leq\alpha$.
Under the conditioning and assumptions of
Theorem~\ref{thm:shift_aware_cci}, it gives
\[
    \mathbb P(A_t,\,X'\notin\mathcal F_r)
    \leq\alpha+\epsilon_t.
\]
These bounds concern the joint event that a fresh proposal is
accepted and is a false positive.
They do not assert the same bounds conditional on acceptance
or simultaneously over the full search trajectory.

\end{document}